\theoremstyle{definition}
\newtheorem{definition}{Definition}
\newtheorem{theorem}{Theorem}
\theoremstyle{remark}
\newtheorem*{remark}{Remark}
\begin{document}

\title{On Leveraging Large Language Models for Enhancing Entity Resolution: A Cost-efficient Approach}

\author{Huahang Li}
\orcid{0009-0000-3087-9608}
\affiliation{%
  \institution{The Hong Kong Polytechnic University}
  \country{Hong Kong}
}
\email{hua-hang.li@connect.polyu.hk}

\author{Longyu Feng}
\affiliation{%
  \institution{The Hong Kong Polytechnic University}
  \country{Hong Kong}
}
\email{longyu.feng@polyu.edu.hk}

\author{Shuangyin Li}
\affiliation{%
  \institution{South China Normal University}
  \city{Guangzhou}
  \country{China}}
\email{shuangyinli@scnu.edu.cn}

\author{Fei Hao}
\affiliation{%
  \institution{The Hong Kong Polytechnic University}
  \country{Hong Kong}
}
\email{ffaye.hao@polyu.edu.hk}

\author{Chen Jason Zhang}
\authornote{Chen Jason Zhang is the corresponding author.}
\affiliation{%
  \institution{The Hong Kong Polytechnic University}
  \country{Hong Kong}
  }
 \email{jason-c.zhang@polyu.edu.hk}

\author{Yuanfeng Song}
\affiliation{%
  \institution{Webank Co. Ltd.}
  \city{Shenzhen}
  \country{China}
 }
\email{yfsong@webank.com}

\renewcommand{\shortauthors}{Huahang Li et al.}

\begin{abstract}
Entity resolution, the task of identifying and merging records that refer to the same real-world entity, is crucial in sectors like e-commerce, healthcare, and law enforcement. Large Language Models (LLMs) introduce an innovative approach to this task, capitalizing on their advanced linguistic capabilities and a ``pay-as-you-go'' model that provides significant advantages to those without extensive data science expertise. However, current LLMs are costly due to per-API request billing. Existing methods often either lack quality or become prohibitively expensive at scale. To address these problems, we propose an uncertainty reduction framework using LLMs to improve entity resolution results. We first initialize possible partitions of the entity cluster, refer to the same entity, and define the uncertainty of the result. Then, we reduce the uncertainty by selecting a few valuable matching questions for LLM verification. Upon receiving the answers, we update the probability distribution of the possible partitions. To further reduce costs, we design an efficient algorithm to judiciously select the most valuable matching pairs to query. Additionally, we create error-tolerant techniques to handle LLM mistakes and a dynamic adjustment method to reach truly correct partitions. Experimental results show that our method is efficient and effective, offering promising applications in real-world tasks.
\end{abstract}

\begin{CCSXML}
<ccs2012>
   <concept>
       <concept_id>10002951.10002952.10003219.10003223</concept_id>
       <concept_desc>Information systems~Entity resolution</concept_desc>
       <concept_significance>500</concept_significance>
       </concept>
   <concept>
       <concept_id>10002951.10003317.10003338.10003341</concept_id>
       <concept_desc>Information systems~Language models</concept_desc>
       <concept_significance>500</concept_significance>
       </concept>
 </ccs2012>
\end{CCSXML}

\ccsdesc[500]{Information systems~Entity resolution}
\ccsdesc[500]{Information systems~Language models}

\keywords{Entity Resolution, Data Integration, Large Language Models}

\maketitle

\section{Introduction}
Entity resolution refers to identifying and matching records that refer to the same real-world object. It is also known as entity matching, a critical task in data integration, cleaning, and quality improvement. Living in the ear of the web, entity resolution becomes even more critical. Because the vast web expanse brings an overwhelming amount of information, they are often duplicated, fragmented, or represented in myriad forms. For example, websites, online databases, social media platforms, and other online resources frequently contain overlapping and redundant data \cite{getoor2013entity, christophides2019end}. Table~\ref{tabduplicates} shows a web application that serves as an online directory for professionals, such as LinkedIn. However, some records are duplicated and refer to the same person. We aim to identify and merge these records to create a single and comprehensive record of each entity. 

In the early stage of entity resolution methods, most of them rely on manually defined matching rules and statistical and probabilistic-based models that consider various factors to compute match probabilities \cite{fellegi1969theory, blakely2002probabilistic, elmagarmid2006duplicate}. Then, several supervised machine learning systems are proposed to tackle the entity resolution task \cite{winkler2014matching, ebraheem2017deeper, li2020deep, zeakis2023pre}. Modern state-of-the-art methods mainly adopt a Pre-trained Language Model (PLM) to encode the attributes of given entities and compare their semantic similarities with a trainable classifier model. The PLM is trained on a large corpus to obtain a powerful semantic understanding ability, and the classifier model is fine-tuned on a task-specific entity-matching dataset. However, these methods have two significant drawbacks: first, fine-tuning a classifier model requires lots of labeled data, which leads to high labor costs; second, the fine-tuned task-specific models show poor performance on other out-of-domain data.

With the success of recent Large Language models (LLMs), such as OpenAI's GPT-4, Google's Claude3, etc.. They have shown impressive abilities in understanding text semantics and can tackle some reasoning tasks even with human-level performance \cite{zhao2023survey, min2023recent}. These LLMs are trained on more large, vast, and diverse datasets, enabling them to capture intricate linguistic patterns, contextual nuances, and semantic meanings. Previous studies \cite{narayan2022can, peeters2023entity, li2024booster} have shown a novel solution to the entity resolution task by leveraging the capability of LLMs. It is practical and easy to use, as we can ask LLM to identify it via API request. However, since nowadays most LLMs are charged for API requests, simply posing all the matching questions becomes too costly in large-scale datasets, that is, a dataset containing $n$ records may lead up to $\frac{n(n-1)}{2}$ matching pairs. Though there are some open-sourced LLMs, implemented on a local server, may cost even more by requiring large GPU memory and inference computational power. Moreover, to teach the LLMs to better identify the matching pairs, in-context learning is a practical strategy to enhance the model's ability in entity resolution tasks. However, the increased prompt length may even lead to more cost. 

In this study, we investigate the cost-efficiency of leveraging LLMs in entity resolution tasks by selecting which questions are most valuable. We propose an uncertainty reduction framework. Theoretically, we consider every possible partition of the entities as the entity resolution results. Each possible partition is associated with a probability of being correct. Then, we can use the Shannon entropy to estimate the uncertainty of the entity resolution results. After that, we posed matching questions for LLMs to verify and then adjusted the probabilities of possible partitions based on the answers, thus reducing the uncertainty. After several iterations, we finally got the enhanced entity resolution results. The rationale behind our framework is straightforward: Within a fixed system, uncertainty can only be reduced by external energy input; this is where the LLMs play critical roles. In practice, the possible partitions are initialized by traditional similarity-based tools with different thresholds. The probability distribution can be initialized by statistical methods. The problem now is how to select the most valuable question. Note that there are some inherent properties such as ``transitivity'' in entity resolution results; for example, if we already know $``r_1 =  r_2"$, and $``r_2 =  r_3"$, there is no need to ask whether $``r_2 = r_3"$. Also, each record contains different attributes that may vary in length, resulting in different API request costs. Therefore, we have designed a cost-efficient algorithm to select the matching questions. Moreover, LLM sometimes makes mistakes, and to provide an error-tolerant technique, we design a dynamic adjustment approach. The key contributions of this work are summarized as follows:

\begin{itemize}
\item We propose an uncertainty reduction framework for leveraging the LLMs in entity resolution. We formulate how to calculate the uncertainty reduction caused by matching questions and prove that the uncertainty reduction of a set of MQs is equivalent to the joint entropy of their possible answer set.

\item Based on this formulation, we prove the NP-hardness of the matching questions selection problem. We then propose an optimal greedy-based algorithm that efficiently selects matching questions within budget constraints, effectively reducing uncertainty.

\item Our approach incorporates an error-tolerant design, allowing LLMs to utilize even imperfect answers to decrease uncertainty. We can achieve accurate entity partitions through dynamic adjustments, even if they deviate from initial estimates.

\end{itemize}

\section{Preliminary Knowledge}

In this section, we provide definitions related to the problem we are working on in this paper.

\begin{definition}[Possible Partition]
Let $P_i= \{C_1, C_2,\cdots, C_k\}$ refer to a possible partition of the entity resolution result, where $C_i=\left\{r^{(i)}_1,r^{(i)}_2,\cdots, r^{(i)}_{|C_i|}\right\}$ denotes a cluster of records that represent a unique entity. 
\end{definition}

\begin{remark}
For example, as shown in Table~\ref{tab:partitions}, $P_1, P_2, P_3, P_4$ are four possible partitions. It is important to note that multiple records could refer to the same entity, which is common in large datasets. \end{remark}

\begin{definition}[Result Set]
Let all the possible partitions compose the result set $R$, together with a probability assignment function $\mathcal{P}: P_i \rightarrow [0,1]$. Each partition $P_i \in R$ has the probability $\mathcal{P}(P_i)$ to be correct, and we have $\sum_{P_i \in RS}\mathcal{P}(P_i) = 1$.
\end{definition}

\begin{definition}[Uncertainty of Result]

We employ the Shannon entropy to measure the uncertainty of the result set $R$. Shannon entropy provides a quantitative measure of uncertainty or randomness within data distributions, which can adapt to different data types. It is a domain-independent concept that can be applied across various industries and domains without significant modifications. Also, the calculations of Shannon entropy can be computationally efficient, particularly for tasks involving large datasets. This makes applying entropy-based methods to real-world scenarios involving substantial amounts of data feasible. In our paper, the entropy of $R$ can be defined as:

\begin{equation}
\label{eqrs}
H(R) = -\sum_{P_i\in R} \mathcal{P}(P_i)\log \mathcal{P}(P_i).
\end{equation}
\end{definition}

\begin{definition}[Matching Pair]
A matching pair can be defined as $m_{ij} = (r_i, r_j)$, which indicates a possible matching between $r_i$ and $r_j$. Note that $m_{ij}$ can appear in more than one possible partition, so for any $m_{ij} $, let $\mathcal{P}(m_{ij})$ be the probability of $m_{ij}$ being in the correct partition, then:
\begin{equation}
\mathcal{P}(m_{ij}) = \mathop{\sum_{P_i\in R \atop m_{ij}\in P_i}}\mathcal{P}(P_i).
\end{equation}

As a simple extension, for a set of matches $U = \{m_{ij}^{(1)},\cdots,m_{ij}^{(k)}\}$, let $\mathcal{P}(U)$ be the probability that all matches of $U$ are in the correct partition, then:
\begin{equation}\label{eqai}
\mathcal{P}(U) = \mathop{\sum_{P_i\in R \atop U\subseteq P_i}}\mathcal{P}(P_i).
\end{equation}

Thus, we can define the matching set $MS = \bigcup_{P_i \in R} \{ m_{ij} \mid m_{ij} \in P_i \}$, which contains all the matching pairs in all the possible partitions $P_i$.
\end{definition}

\begin{definition}[Matching Question]
A matching question (MQ) asks whether a match of entities is correct. For example, the $(r_i, r_j)$ refers to $r_i$ and $r_j$ belong to the same person, and the MQ could be asked as ``Given two records $r_i$ and $r_j$, identify whether they refer to the same records and answer me only ``yes’’ or ``no’’.''
\end{definition}

\begin{definition}[Cost of MQ]
To estimate the cost of each question, we have defined a price function $\mathcal{F}: MQ\rightarrow N$. Since the response of LLM is a fixed binary answer, the number of tokens in the response is also constant, so the main variation lies in the MQs.
\end{definition}

\begin{table}[h]
\vspace{-1em}
\centering
\caption{Selected GPT-3.5-Turbo and GPT-4 Series Pricing.}
\label{table:gpt_pricing}
\begin{tabularx}{\columnwidth}{lXX}
\hline
\textbf{Model}&\textbf{Input}&\textbf{Output}\\ 
\hline
GPT-3.5-turbo-0125& \$1.50/1M tokens & \$2.00/1M tokens\\
GPT-3.5-turbo-instruct& \$0.50/1M tokens & \$2.00/1M tokens\\
GPT-4& \$30.00/1M tokens& \$60.00/1M tokens\\
GPT-4-turbo& \$10.00/1M tokens& \$30.00/1M tokens\\
GPT-4o& \$5.00/1M tokens& \$15.00/1M tokens\\
\hline
\end{tabularx}
\end{table}

\begin{table*}[t]
\centering
\caption{The database for this web application stores profiles of these professionals, and each profile contains information like Name, Email, Job Title, Company, and Location.}
\label{tabduplicates}
\begin{tabular}{|c|c|c|c|c|c|}
\hline
ID & Name          & Email               & Title             & Company       & Location       \\
\hline
$r_1$ & John Doe      & johndoe@email.com   & Software Engineer & TechCorp      & San Francisco  \\
$r_2$ & Andy Doe      & andydoe@email.com   & Software Engineer & TechCorp LLC  & SF, CA         \\
$r_3$ & Jane Smith    & janesmith@email.com & Project Manager   & Innovate Tech & New York       \\
$r_4$ & Jane S.       & janes@email.com     & PM                & Innovate Tech & New York             \\
$r_5$ & David Doe     & davidd@email.com    & Developer         & TechCorp      & SF             \\
$r_6$ & J. Smith      & janesmith@email.com & Proj Manager      & Innovate Tech & New York, NY   \\
$r_7$ & John D.       & johndoe@email.com   & Software Eng.     & TechCorp      & San Fran       \\
$r_8$ & Jane A. Smith & janesmith@email.com & Project Mgr       & Innovate      & NYC            \\
$r_9$ & Jonathan Doe  & johndoe2@email.com  & Software Engineer & TechCorp      & San Francisco  \\
$r_{10}$ & Andy Smith   & andysmith@email.com & Manager           & Innovate Tech & New York, NY   \\
$r_{11}$ & David D.     & david@email.com    & Developer & TechCorp      & San Francisco  \\
\hline
\end{tabular}
\end{table*}

\begin{remark}
OpenAI's LLMs like GPT-4 charge for API requests based on the total number of tokens in both the input question and the model's response \footnote{https://openai.com/pricing.}. For example, if an API request is submitted with an input consisting of 20 tokens and receives an output containing two tokens, the user will be billed for 22 tokens. A token refers to the smallest unit of text that the model reads and processes. Tokens can vary in size and represent individual words, subwords, or even characters. Depending on the specific architecture of the LLM, a token could be as short as a single character or as long as an entire word or phrase. 
\end{remark}

Our goal is to maximize uncertainty reduction by asking valuable questions to LLMs. However, due to budget limitations, choosing the correct set of questions becomes the core challenge in this process. In the following section, we describe how our method works.

\section{Uncertainty Reduction Framework}

In this section, we present our uncertainty reduction framework for enhancing entity resolution results. There are three critical components of our framework, namely, (1) Probability Distribution Initialization, (2) Matching Questions Selection, and (3) Adjustment with LLMs' response. We use the Shannon entropy to estimate the uncertainty of the possible partitions, select the most valuable MQs to leverage LLM to verify efficiently, and then adjust the probability distribution to reduce the uncertainty, thus enhancing the final results. The rationale behind our framework is straightforward: within a fixed system, uncertainty (entropy) can only be reduced by external energy input. This is where the LLMs play the role, utilizing their impressive semantic understanding capabilities to provide precise predictions. However, as previously mentioned, the high API cost makes submitting all MQs for LLM verification impractical. Therefore, we designed this uncertainty reduction framework by verifying only a few valuable MQs to achieve the final results efficiently. In the following subsections, we provide detailed descriptions of each procedure.

\subsection{Probability Distribution of Possible Parition}

\begin{table}[t]
    \centering
    \caption{Distribution of Probabilities for Possible Partitions in Table~\ref{tabduplicates}. The sum of all these probabilities is equal to 1.}
    \label{tab:partitions}
    \begin{tabularx}{0.97\columnwidth}{Xlc}
        \hline
        Possible Partition & Probability \\
        \hline
        \(P_1 = \{(r_1,r_7), (r_3,r_4)\}\) & 0.10 \\
        \(P_2 = \{(r_1,r_7,r_9), (r_3,r_4)\}\) & 0.26 \\
        \(P_3 = \{(r_1,r_7,r_9), (r_3,r_4,r_8),(r_5,r_{11})\}\) & 0.36 \\
        \(P_4 = \{(r_1,r_2,r_7,r_9), (r_3,r_4,r_6,r_8,r_{10}),(r_5,r_{11})\}\) & 0.28 \\
        \hline
    \end{tabularx}
\end{table}

Given two records $r_i = \{attr_1^{(i)}, attr_2^{(i)}, \cdots, attr_n^{(i)}\}$  and $r_j = \{attr_1^{(j)}, attr_2^{(j)}, \cdots, attr_n^{(j)}\}$, where $attr_e$ denotes a attribute of the record, continue with the example Table~\ref{tabduplicates}, the attributes can be ``Name'', ``Email'', or any other. We first use any similarity function, such as Edit distance ``Levenshtein'' or Jaro distance, to compute the similarity between corresponding attributes. Then, for any two records $r_i$ and $r_j$ that satisfy the following condition:
\begin{equation}
\forall e \in [1, n], \, \text{Sim}(r_i[\text{attr}_e], r_j[\text{attr}_e]) \geq \tau,
\end{equation}
where $Sim(r_i[attr_e], r_j[attr_e])$ denotes the similarity of each attribute, and $\tau$ denotes a predefined threshold, we consider them a matching pair $m_{ij}$ in a possible partition $P_\tau$. Since there may be more than two records referring to a single entity. We leverage the transitivity property to merge them to a cluster $C_i$. By iteratively implementing this process for every two records with different threshold $\tau$, we can obtain a set of possible partitions $R = \{P_{{\tau}^1}, P_{{\tau}^2},\cdots, P_{{\tau}^n}\}$, i.e., the result set. This can be seen as the initial entity resolution results produced by traditional similarity-based methods, which are often mediocre. 
 
Then, with a probability assignment function $\mathcal{P}(\cdot)$, there are two methods to initialize the probability distribution of possible partitions: 
(1) The probability of each possible partition can be initialized by resembling a normal distribution, where probabilities are higher in the middle and lower at the edges. Specifically, a possible partition obtained by filtering out matching pairs with a higher threshold tends to have higher precision. In comparison, a possible partition with a lower threshold tends to have higher recall. Specifically, we create $n$ samples from a standard normal distribution and then normalize the sorted samples to construct a valid probability distribution;
(2) A simple but effective way is to initialize the probability of each possible partition equally, that is, $1/n$. When the probability distribution is uniform, each event has an equal probability of occurring. This indicates that the entropy is maximized in a uniform distribution, representing the highest level of uncertainty because there is no preference or bias toward any particular outcome. In such a scenario, the information required to describe the distribution is maximized, as all possible outcomes are accounted for equally. 

After initializing the probability distribution of the possible partitions, we delve into the formalization of our framework and illustrate how to compute the expected uncertainty reduction caused by $k$ MQs.

\subsection{Expected Uncertainty Reduction}
To obtain a cost-efficient approach to reducing the uncertainty of $R$, we first need to calculate the expected uncertainty reduction through MQs. For a given set of $k$ MQs, denoted as $S_Q = \left\{MQ_1, MQ_2, \cdots, MQ_k\right\}$, our objective is to deduce the expected decrease in uncertainty resulting from the aggregation of answers to these $k$ MQs. To facilitate this calculation, we introduce two key components: $D_A$, represents the domain of possible answers, and $\mathcal{P}_A$, represents the probability distribution of these answers:
\begin{equation}
\begin{split}
&D_A = \left\{a_i|a_i=\{A_1^{(i)},A_2^{(i)},\cdots,A_k^{(i)}\},\  A_j^{(i)} = Y \ or \  N\right\}, \\
&\mathcal{P}_A = \left\{ \mathcal{P}(a_1), \mathcal{P}(a_2), \cdots, \mathcal{P}(a_{2^k})\right\}.
\end{split}
\end{equation}

Within $D_A$, $A_k$ denotes the answer associated with the given MQ. Since LLMs are required to provide either a binary ``yes’’ or ``no’’ answer, there are two potential states for $A_k$. Moreover, each element $a_i$ corresponds to a potential set of answers for the $k$ MQs, resulting in $|D_A| = 2^k$ distinct combinations, each with an associated probability in $\mathcal{P}_A$. Now let $\Delta H_{S_Q}$ be the amount of uncertainty reduced in result set $R$ by $S_Q$, this can be expressed as follows:
\begin{equation}
\label{eqh}
\begin{split}
\Delta H_{S_Q} &= H(R) - H(R|A_1,A_2,\cdots,A_k) \\
&= H(R) - \left( -\sum_{a_i\in D_A}\mathcal{P}(a_i) \sum_{P_j\in RS}\mathcal{P}(P_j|a_i) \log \mathcal{P}(P_j|a_i)\right).
\end{split}
\end{equation}
 
By employing the Naive Bayes to the conditional probability, we can calculate the reduction in uncertainty attributed to $S_Q$ as follows:
 \begin{equation}
\label{eqeh}
\begin{split}
\Delta H_{S_Q} &= H(R) + \mathop{\sum_{P_j \in R \atop a_i \in D_A}}\mathcal{P}(P_j)\mathcal{P}(a_i|P_j)\log \frac{\mathcal{P}(P_j)\mathcal{P}(a_i|P_j)}{\mathcal{P}(a_i)} \\
&= H(R) + \mathop{\sum_{P_j \in R \atop a_i \in D_A}}\left\{ \mathcal{P}(P_j)\mathcal{P}(a_i|P_j)\log \mathcal{P}(P_j)\mathcal{P}(a_i|P_j)\right. \\
&\qquad \qquad \qquad \quad- \left. \mathcal{P}(P_j)\mathcal{P}(a_i|P_j)\log \mathcal{P}(a_i) \right\}.
 \end{split}
 \end{equation}

Therefore, the expected uncertainty reduction caused by $S_Q$ can be computed by Eq.~\ref{eqeh}, provided that we know the values of the following parameters: $\mathcal{P}(a_i), \mathcal{P}(a_i|P_j)$. Here, we give the method for computing these values.
 
\textbf{Computation of} $\mathcal{P}(a_i)$: $\mathcal{P}(a_i)$ represents the probability that all the matching pairs by MQs associated with $a_i$ are within the correct partition. This can be calculated using Eq.~\ref{eqai}, yielding:
\begin{equation}
\mathcal{P}(a_i) = \sum_{\substack{P_j \in RS \\ t = 1,\cdots,k \\ \forall m^{(i)}_t = Y, m_t \in P_j \\ \forall m^{(i)}_t = N, m_t \notin P_j}} \mathcal{P}(P_j).
\end{equation}

\textbf{Computation of} $\mathcal{P}(a_i|P_j)$: $\mathcal{P}(a_i|P_j)$ is either 1 or 0, depending on whether the correct matching pairs by MQs associated with $a_i$ are all within the partition $P_j$, as well as whether the incorrect matching pairs by MQs associated with $a_i$ are all outside the partition $P_j$. This can be expressed as follows:
\begin{equation}
\mathcal{P}(a_i|P_j) =  \left\{ 
\begin{array}{l}
1, \quad \forall m^{(i)}_t = Y, m_t \in P_j  \\ \quad AND \ \   \forall m^{(i)}_t = N, m_t \notin P_j; \\
0, \quad \exists m^{(i)}_t = Y, m_t \notin P_j \\ \quad OR \ \  \exists m^{(i)}_t = N, m_t \in P_j.
\end{array}
\right.
\end{equation}

After being equipped with the Computation of the above parameters, Eq.~\ref{eqeh} can be streamlined as follows:

\begin{equation}\label{simplification}
\begin{split}
\Delta H_{S_Q} &= H(R) + \sum_{a_i\in D_A} \sum_{\substack{P_j \in R \\ t = 1,\cdots,k \\ \forall m^{(i)}_t = Y, m_t \in P_j \\ \forall m^{(i)}_t = N, m_t \notin P_j}} \mathcal{P}(P_j)\log \mathcal{P}(P_j) \\
&\qquad \quad - \sum_{a_i\in D_A} \Bigg[\sum_{\substack{P_j \in R \\ t = 1,\cdots,k \\ \forall m^{(i)}_t = Y, m_t \in P_j \\ \forall m^{(i)}_t = N, m_t \notin P_j}}\mathcal{P}(P_j)\Bigg]\log\mathcal{P}(a_i) \\
& = H(R) + \sum_{P_j\in R} \mathcal{P}(P_j)\log \mathcal{P}(P_j) -\sum_{a_i\in D_A} \mathcal{P}(a_i) \log\mathcal{P}(a_i).
\end{split}
\end{equation}

Finally, by Eq.~\ref{eqrs}, the first two terms of the Eq.~\ref{simplification} have canceled each other out. So we have:
\begin{equation}
\begin{split}
\Delta H_{S_Q} &= -\sum_{a_i \in D_A} \mathcal{P}(a_i)\log \mathcal{P}(a_i) \\
&= H(D_A).  
\end{split}
\end{equation}

Therefore, our problem of maximizing the uncertainty reduction of $k$ MQs turns out to be maximizing the joint entropy of $D_A$; that is, we need to choose the matching pairs combined with the highest uncertainty and convert them into certainties. 

\subsection{Strategy for MQs Selection}
After establishing the definition of expected uncertainty reduction through $k$ MQs, we can formally define the optimization function at the $n^{th}$ iteration as follows:
\begin{equation}
X:= argmax_{Q_X}\Delta H_{S^{n-1}_Q\cap\{Q_X\}}.
\end{equation}

Here, let $A^{(n-1)}$ represents the answers for $S^{n-1}_Q$. Applying the chain rule of conditional entropy, we derive:
\begin{equation}
H(D_{A^{(n-1)}},A_n) = H(D_{A^{(n-1)}})+H(A_k|D_{A^{(n-1)}}).
\end{equation}

Therefore, our objective at each iteration simplifies to the maximization of conditional entropy:
\begin{equation}
X: = argmax_{A_n} H(A_n|D_{A^{(n-1)}}),
\end{equation}
and: 
\begin{equation}\label{eqiteration}
\begin{split}
&H(A_n|D_{A^{(n-1)}})\\
&= - \sum_{a_i \in D_A^{n-1}} \mathcal{P}(a_i) \Big[ 
 \mathcal{P}(A_n = Y|a_i) \log \mathcal{P}(A_n = Y|a_i) \\
&\qquad \qquad \qquad+ \mathcal{P}(A_n = N|a_i) \log \mathcal{P}(A_n = N|a_i) \Big].
\end{split}
\end{equation}

Eq.~\ref{eqiteration} proves that, at each iteration, we only need to search for the most uncertain MQs, given the MQs selected in previous iterations. Next, to further consider the cost of MQs, we discuss how to efficiently and effectively enable the uncertainty process. Though the API request cost varies depending on the specific LLMs and platforms, we use a price function $\mathcal{F}: MQ \rightarrow N$ to convert the cost to a constant number in our study. Therefore, we can calculate the anticipated uncertainty reduction and the corresponding cost for a given set of $k$ MQs. 

Now, let us consider two situations: 

\textbf{($k=1$)}. We choose the single most valuable MQ for LLM verification at each iteration. This means maximizing the ``bang-per-buck'': the expected uncertainty reduction divided by the cost. Based on the response of the MQ, we then adjust the probability distribution and pose a new MQ for the next iteration. This process continues until the uncertainty reduction stops or we run out of budget. 

\textbf{($k>1$)}. However, there might be numerous matching pairs across all possible partitions in practice. To provide a faster solution and thus speed up the entire process, we can ask $k$ multiple MQs at once. Based on the aggregation of their answers, we can adjust the probability distribution at each iteration. We named this the Matching Questions Selection Problem (MQsSP) and then formalized it. In MQsSP, each MQ can be in one of two states: (1) chosen or (2) awaiting selection. This potentially results in many selection choices, numbering up to $C^k_{|MS|}$. This constitutes a compelling and valuable optimization problem warranting investigation.

\begin{theorem}\label{nphard}
The MQsSP is NP-hard.
\end{theorem}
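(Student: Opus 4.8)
The plan is to establish NP-hardness of the Matching Questions Selection Problem (MQsSP) via a polynomial-time reduction from a known NP-hard problem. Since the objective function reduces to maximizing the joint entropy $H(D_A)$ of the answer distribution subject to a budget constraint on $\sum \mathcal{F}(MQ_i)$, the problem has the structure of a constrained combinatorial maximization over subsets of the matching set $MS$. The natural candidate source problems are the \textbf{Maximum Coverage} problem or, given the explicit budget/cost structure, the \textbf{Budgeted Maximum Coverage} problem (equivalently, a variant of \textbf{Knapsack}). The plan is to reduce from Maximum Coverage (or its budgeted version), since the joint entropy of the answers is maximized when the selected MQs partition the probability mass of $R$ into as many distinct, balanced answer-combinations $a_i$ as possible — an intuition closely mirroring set coverage.

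First I would formalize the decision version of MQsSP: given a result set $R$ with probability distribution $\mathcal{P}$, a cost function $\mathcal{F}$, a budget $B$, and a target value $\theta$, decide whether there exists $S_Q \subseteq MS$ with $\sum_{MQ_i \in S_Q}\mathcal{F}(MQ_i) \le B$ and $H(D_A) \ge \theta$. Next I would construct, from an arbitrary Maximum Coverage instance (a ground set of elements, a family of subsets, and an integer $\kappa$), a corresponding MQsSP instance: the plan is to map each candidate subset to a matching question $MQ_i$, engineer the possible partitions in $R$ and their probabilities so that the induced partition of probability mass over answer-vectors $a_i$ encodes exactly which elements are "covered" by the chosen questions, assign unit costs with budget $B=\kappa$, and pick $\theta$ so that $H(D_A)\ge\theta$ holds precisely when the chosen questions cover at least the required number of elements. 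The key technical device is that $H(D_A)$ is strictly Schur-concave and strictly increasing as probability mass is split across more distinct answer-combinations, so maximizing entropy aligns with maximizing the number of distinct mass-bearing combinations, i.e., with coverage.

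The main obstacle I anticipate is the entropy-to-coverage translation: unlike a linear coverage count, $H(D_A)$ depends nonlinearly on how probability mass is distributed among the $a_i$, so I must design the gadget partitions and probabilities carefully so that the entropy threshold $\theta$ is crossed \emph{if and only if} the coverage target is met, with no spurious high-entropy configurations achievable by an infeasible or "wrong" selection. Concretely, I would give each element its own small probability atom and arrange matching pairs so that selecting a question "activates" (splits off) exactly the atoms of the elements it covers; then the number of distinct realized answer-vectors, and hence the attained entropy, is a strictly monotone function of the number of covered elements. Verifying this monotonicity rigorously — and ensuring the construction is polynomial in size despite $|D_A| = 2^k$ growing exponentially in $k$ (which I would control by keeping $k = B = \kappa$ fixed relative to the ground set, so that the relevant nonzero-probability combinations remain polynomially bounded) — is where the bulk of the care is needed. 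Once the biconditional between "coverage $\ge \kappa$-many elements" and "$H(D_A)\ge\theta$" is secured and the reduction is shown to run in polynomial time, NP-hardness of MQsSP follows immediately from the NP-hardness of Maximum Coverage.
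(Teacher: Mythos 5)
Your proposal takes essentially the same route as the paper: a polynomial-time reduction from the Maximum Coverage Problem to MQsSP, with unit costs per question and budget $B=\kappa$, so that maximizing the joint entropy $H(D_A)$ corresponds to maximizing coverage. In fact, your plan is more careful than the paper's own proof, which merely asserts the correspondence between covered elements and uncertainty reduction without constructing any gadget; your identification of the entropy-to-coverage translation as the crux, and the small-probability-atom construction to make entropy strictly monotone in the number of covered elements, is exactly what is needed to make the paper's sketch rigorous.
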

\begin{proof}
To establish the NP-hardness, we can reduce the Maximum Coverage Problem (MCP) to our problem. An instance of MCP encompasses a universe of elements $U$, a collection $S = {S_1, S_2, \ldots, S_m}$ of subsets of $U$, and an integer $k$. The goal of MCP is to find a sub-collection $S' \subseteq S$ of size at most $k$ that maximizes the number of covered elements, i.e., $|\bigcup_{S \in S'} S|$. It is known that finding the set $S'$ that maximizes the number of covered elements is NP-hard.

We show that for any instance $(U, S)$ of MCP, we can create a corresponding instance of our problem based on $(U, S)$ in polynomial time. We translate the set $U$ of universal items into the set $MS$ of matching pairs to be determined by corresponding MQs. We need to select up to $k$ MQs (each corresponding to a subset in $S$) to maximize the uncertainty reduction at each iteration, which is equivalent to the joint entropy of the answer set $D(A)$. Consider a special case where each MQ cost equals one and the knapsack capacity $B = k$. Finally, since the MCP is NP-hard, and we have shown a polynomial-time reduction from it to our MQsSP, the latter problem must also be NP-hard. Thus, completing the proof.
\end{proof}

\begin{algorithm}[t]
\caption{Greedy-based Algorithm for MQsSP.}
\label{Al}
\begin{algorithmic}[1]
\Require the result set $R$, the price function $\mathcal{F}$, and the remaining budget $B$.
\Ensure A set of $k$ MQs - $S_Q$ maximizing $ H_{S_Q}$.

\State $S_{Q_1} \leftarrow argmax_{S_Q \in N, \mathcal{F}(S_Q) \leq B, |S_Q| < d} H(S_Q)$
\State$S_{Q_2} \leftarrow \emptyset$
\For{all $S_Q \subseteq MS, |S_Q| = d, \mathcal{F}(S_Q) \leq B$}
    \State $MS' \leftarrow MS \setminus S_Q$
    \State $S_{Q_g} \leftarrow S_Q$
    \While{$MS' \neq \emptyset$ and $\mathcal{F}(S_Q) < B$}
        \State Select $m_{ij}^* = \arg\max_{m_{ij} \in MS'} \left\{\frac{H(\{MQ\} \cup S_Q) - H(S_Q)}{\mathcal{F}(MQ)}\right\}$
        \If{$\mathcal{F}(S_{Q_g}) \cup \{MQ\}) \leq B$}
            \State $S_{Q_g} \leftarrow S_{Q_g} \cup \{MQ\}$
        \EndIf
        \State $MS' \leftarrow MS' \setminus \{m_{ij}^*\}$
    \EndWhile
        \If{$H(S_{Q_g}) \geq H(S_{Q_2})$}
        \State $S_{Q_2} \leftarrow S_{Q_g}$
    \EndIf
\EndFor
\State \Return $\arg\max \{H(S_{Q_1}), H(S_{Q_2})\}$
\end{algorithmic}
\end{algorithm}

To efficiently address the MQsSP, we propose a greedy-based algorithm. Maximizing the $H(D_A)$ can be regarded as maximizing the joint entropy of a set of random variables, which has been established as a monotone submodular function. Thus, the basic idea is to maximize the marginal contribution divided by the cost at each step until selecting the $k^{th}$ MQs or running out of budget. When adding an item would violate the budget constraint, the item is discarded, and the iteration continues to inspect possibly cheaper items.

However, as discussed in \cite{horel2015notes, khuller1999budgeted}, simply iteratively selecting the most valuable MQs has an unbounded approximation ratio, especially when there are high-value items. Fortunately, this can be alleviated by partial enumeration. The detailed steps of the algorithm are shown in Algorithm~\ref{Al}. It improves the greedy algorithm by first finding a subset $S_1$ of size $d$ with maximum value, then running the greedy algorithm on all subsets of size $d$. The best result obtained through this process is returned. Theoretically, the approximation ratio of Algorithm 4 is at least $1 - \frac{1}{\sqrt{e}}$.

\subsection{Adjustment with LLMs Response}
Upon receiving the answers to selected MQs from LLM, we will continue with the third part of our uncertainty reduction framework. That is, adjust the probability distribution of possible partitions. In real applications, the LLMs-generated answers may contain mistakes; we must allow for the possibility that any LLMs-generated answers could be wrong. It is also a worthy investigative topic. A simple way to estimate the probability of this happening is to calculate the error rate of the LLMs on ER tasks, and the capability of LLM can be denoted by $\mathcal{P}(\Theta)$. 

\textbf{Running Example}: Continuing with the example in Table~\ref{tab:partitions}, assume the matching pair $m_{48}$ is recognized as correct by the LLM and $\mathcal{P}(\Theta) = 90\%$, we can deduce:
\begin{equation}
\begin{split}
\mathcal{P}&(P_3|e: m_{48} \text{ is answered from LLM}) =\frac{\mathcal{P}(P_3)\mathcal{P}(e|P_3)}{\mathcal{P}(e)}\\
=\ & \frac{\mathcal{P}(P_3)\mathcal{P}(\Theta)}{\mathcal{P}(m_{48})\mathcal{P}(\Theta) + (1- \mathcal{P}(m_{48}))(1-\mathcal{P}(\Theta))} \\
=\ &\frac{0.36 * 0.9}{0.64*0.9+0.36*0.1}\\
=\ &0.53.
\end{split}
\end{equation}
Similarly, for $P_4$, we have: 
\begin{equation}
\begin{split}
\mathcal{P}&(P_4|e: m_{48} \text{ is answered from LLM})) = \frac{\mathcal{P}(P_4)\mathcal{P}(e|P_4)}{\mathcal{P}(e)}\\
= \ & \frac{\mathcal{P}(P_4)\mathcal{P}(\Theta)}{\mathcal{P}(m_{48})\mathcal{P}(\Theta) + (1- \mathcal{P}(m_{48}))(1-\mathcal{P}(\Theta))} \\
= \ &\frac{0.28 * 0.9}{0.64*0.9+0.36*0.1}\\
= \  &0.41.
\end{split}
\end{equation}
And for $P_1,P_2$ that do not contain the $m_{48}$ as a matching pair:
\begin{equation}
\begin{split}
\mathcal{P}&(P_1|e: m_{48} \text{ is answered from LLM})) = \frac{\mathcal{P}(P_1)\mathcal{P}(e|P_1)}{\mathcal{P}(e)}\\
= \ & \frac{\mathcal{P}(P_1)(1-\mathcal{P}(\Theta))}{\mathcal{P}(m_{48})\mathcal{P}(\Theta) + (1- \mathcal{P}(m_{48}))(1-\mathcal{P}(\Theta))} \\
= \ &\frac{0.10 * 0.1}{0.64*0.9+0.36*0.1}\\
= \  & 0.02,
\end{split}
\end{equation}
And:
\begin{equation}
\begin{split}
\mathcal{P}&(P_2|e: m_{48} \text{ is answered from LLM})) = \frac{\mathcal{P}(P_2)\mathcal{P}(e|P_2)}{\mathcal{P}(e)}\\
= \ & \frac{\mathcal{P}(P_1)(1-\mathcal{P}(\Theta))}{\mathcal{P}(m_{48})\mathcal{P}(\Theta) + (1- \mathcal{P}(m_{48}))(1-\mathcal{P}(\Theta))} \\
= \ &\frac{0.26 * 0.1}{0.64*0.9+0.36*0.1}\\
= \  & 0.04.
\end{split}
\end{equation}

Thus, the uncertainty of the possible partitions is reduced from 2 to 1. By our error-tolerant technique, the decrease is less than if the answers were obtained with no error. In short, even imperfect answers can help reduce the uncertainty. Moreover, for the cases when $k>1$, it is easy to perform the algebraic manipulations to show that, for any two answers $e_1$ and $e_2$, we have:
\begin{equation}\label{independent}
\mathcal{P}(P_i|e_1,e_2) = \mathcal{P}(P_i|e_2,e_1),
\end{equation}
which indicates that the final result of adjustment is independent of the sequence of the answers. In other words, when we have a set of MQs, it does not matter in what sequence the answers are used for adjustment.

However, the correct answer may exist outside the initial possible partitions. Therefore, we introduce a new approach to reach the correct partition by dynamically deleting matching pairs that are certain to be wrong by LLMs. Let us consider a special case where most matching pairs are correct in a possible partition but contain a few error pairs because our initialization does not contain all the possible partitions up to $2^n$ possible partitions. The best way is to generate the correct partition dynamically. Technically, we first adjust the probability via Naive Bayes, as mentioned above. We will improve the probability of possible partitions that do not contain this matching pair and reduce the probability of possible partitions that contain this matching pair. Then, we remove the matching pair from the possible partitions which contain it. If the possible partitions after removing the error-matching pair still contain most of the correct matches, the probability of it being correct will be increased rapidly in the next iteration.

\section{Experiments}

In this section,  we evaluate our methods and report experimental results. We focus on evaluating two issues. First, we examine the effectiveness of our proposed method in reducing the uncertainty for possible partitions. Second, we verify the cost-efficiency of our algorithm.

\subsection{Experimental Setup}

\begin{table}[t]
\centering
\caption{Datasets Information}
\label{dataset}
\begin{tabularx}{0.97\columnwidth}{|c|c|c|c|c|}
\hline
\textbf{Dataset}&\textbf{Domain} &\textbf{Attr.} & \textbf{Pairs} & \textbf{Matches}\\
\hline
DBLP-ACM&Citation&4&12,363&2,220\\                          
\hline
Walmart-Amazon&Electronics&5&10,242&962\\       
\hline
Abt-Buy&Product&3&9,575&1,028\\
\hline
\end{tabularx}
\end{table}

\begin{figure*}[t]
\includegraphics[width=0.98\textwidth]{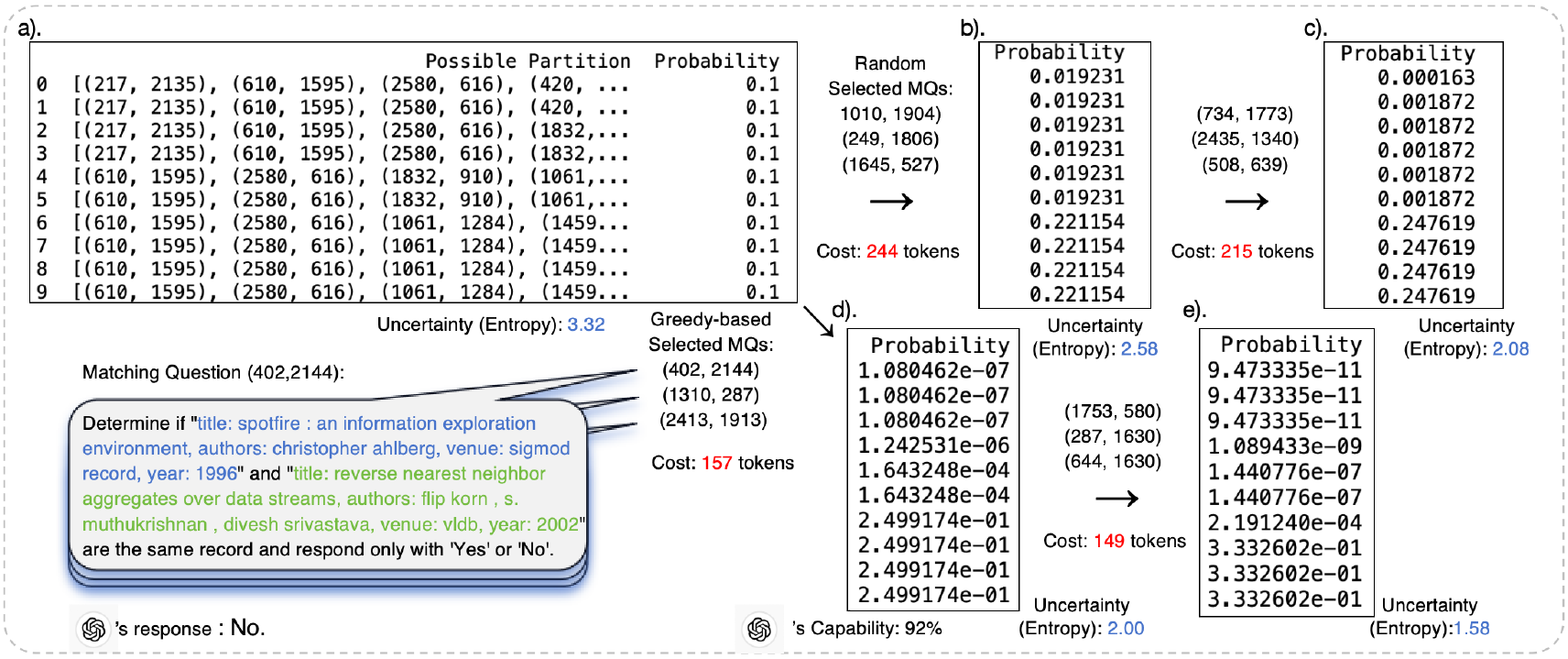}
\caption{A real case of our uncertainty reduction framework: a). shows the initial probability distribution of possible partitions; b), c). shows the probability distribution after verifying the MQs adjusted by random selection; d), e). shows the probability distribution after verifying the MQs adjusted by greedy selection. We annotate the uncertainty of each state and the cost used in each step.}
\label{cases}
\end{figure*}

\textbf{Datasets.} We use three real-world datasets widely adopted by existing works \cite{fan2024cost, li2020deep}. (1) DBLP-ACM is a dataset containing citation records from DBLP and ACM with four attributes: Title, Year, Author, and Venue. (2) Walmart-Amazon dataset contains information on electronics over two shopping platforms. The datasets have five attributes: Title, Category, Brand, model, and Price. (3) Abt-Buy is a dataset that contains product information with three attributes: Name, Description, and Price. The information of these three datasets is introduced in Table~\ref{dataset}. Note that there may be no data in some attributes.\\
\textbf{Similarity Functions.} We use three similarity functions, ``levenshtein distance'', ``jaro distance'', and ``jaccard distance'' \cite{LevenshteinDistance, jaro1989advances}. All of them are implemented by the Record Linkage  \cite{de_bruin_j_2019_3559043}, a widely-used Python library. To accommodate diverse matching scenarios, we implemented a series of thresholds ranging from 0.50 to 0.95, with increments of 0.05. This approach generated a spectrum of possible partitions.\\
\textbf{Evaluation Metrics.} For quality, we use accuracy, precision, and recall. Suppose the set of pairs that refer to the same entity is $S_T$ and the set of pairs that an algorithm reports as the same entity is $S_P$. Then the precision is $p = \frac{|S_T\cap S_P|}{|S_P|}$, the recall is $r = \frac{|S_T\cup S_P|}{|S_T|}$. For cost efficiency, we compare the uncertainty reduction curve of different question selection strategies on API cost, irrespective of the various LLMs pricing information; we count the number of tokens used in the uncertainty reduction process.

\subsection{Evaluation on LLM}

To study the effectiveness of our uncertainty reduction framework, we first evaluate the capabilities of different LLMs on domain-specific datasets. We access these models through their respective API services. We conduct a preliminary evaluation for each domain using a small test set comprising 100 labeled matching pairs. This serves as a measure of each LLM's capability to handle entity resolution tasks. As illustrated in Figure~\ref{capability}, the GPT-4 series is currently recognized as the most advanced LLM globally, offering unparalleled performance.
In contrast, the GPT-3.5 series, while less capable, is noted for its cost-effectiveness, making it a viable option for budget-conscious applications. Additionally, we explored the capabilities of the Claude 3 model, which has demonstrated robust performance in addressing scientific queries. For this study, we leverage OpenAI's GPT-4-turbo to optimize our entity resolution results, capitalizing on its advanced features and high accuracy in complex data scenarios. 

\begin{figure}[t]
    \centering
    \includegraphics[width=0.45\textwidth]{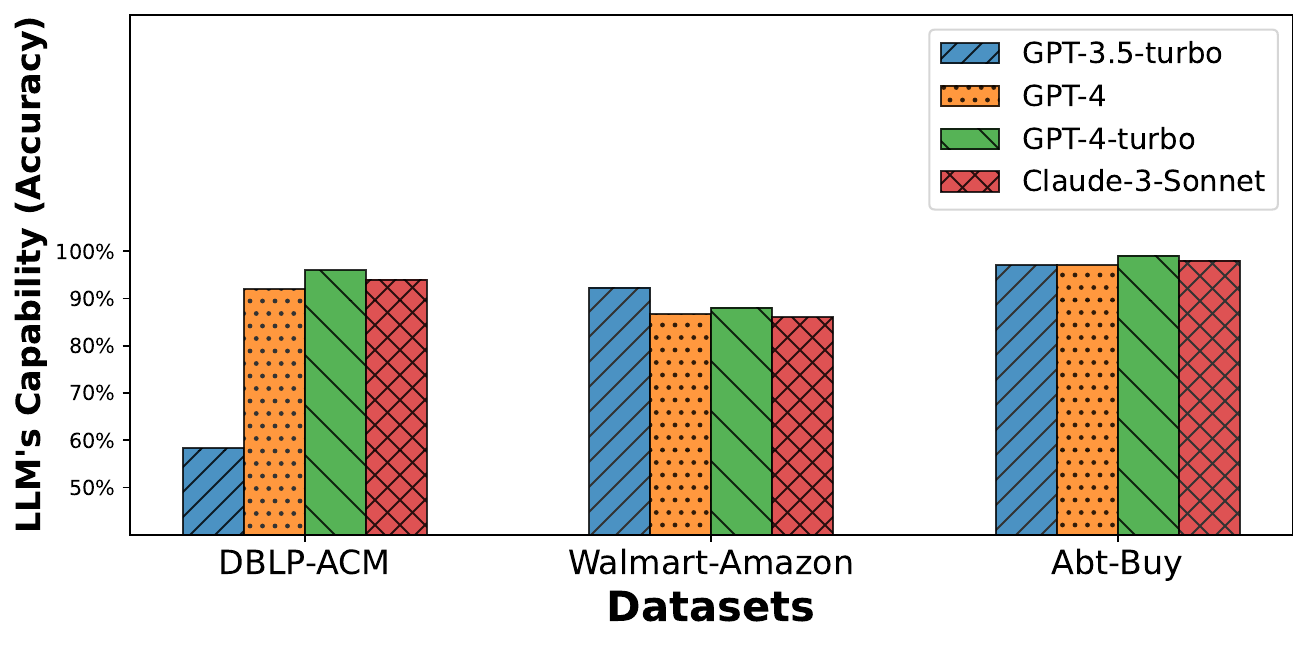}
    \vspace{-1em}
    \caption{The abilities of various LLMs on datasets in different fields. All results are tested three times and averaged.}
    \label{capability}
    \vspace{-2em}
\end{figure}

\begin{figure}[t]
\centering
\subfloat[DBLP-ACM 1k Budget]{
  \includegraphics[width=0.23\textwidth]{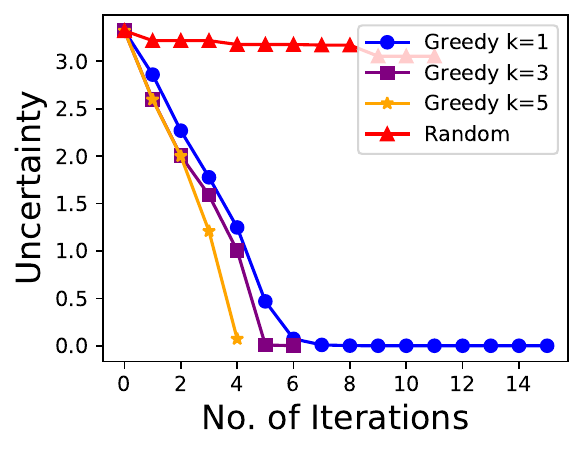}
}
\subfloat[DBLP-ACM 2k Budget]{
  \includegraphics[width=0.23\textwidth]{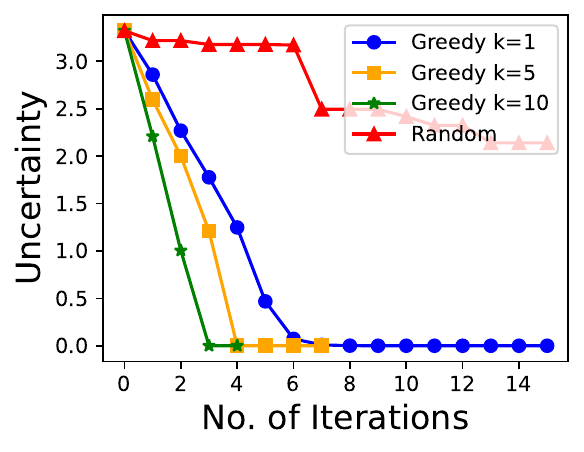}
}
\\
\vspace{0.5em}
\subfloat[Walmart-Amazon 1k Budget]{
  \includegraphics[width=0.23\textwidth]{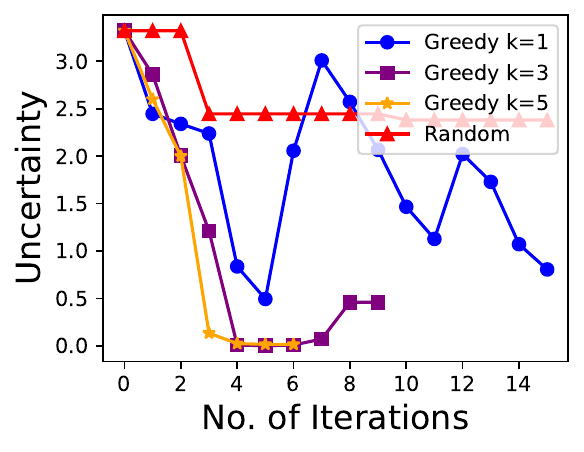}
}
\subfloat[Walmart-Amazon 2k Budget]{
  \includegraphics[width=0.23\textwidth]{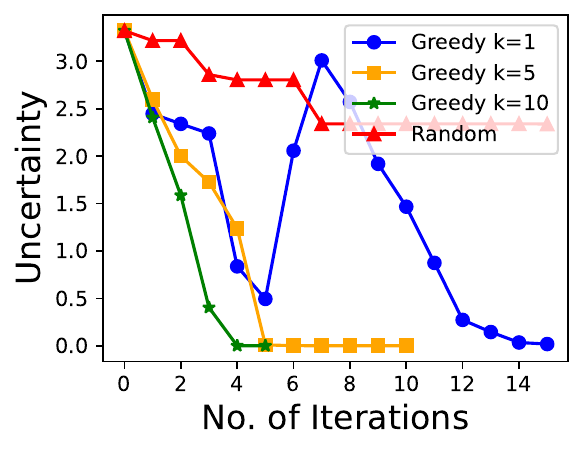}
}
\\
\vspace{0.5em}
\subfloat[Abt-Buy 1k Budget]{
  \includegraphics[width=0.23\textwidth]{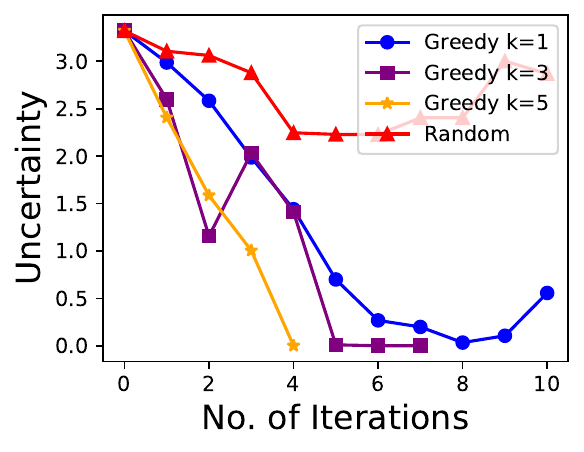}
}
\subfloat[Abt-Buy 2k Budget]{
  \includegraphics[width=0.23\textwidth]{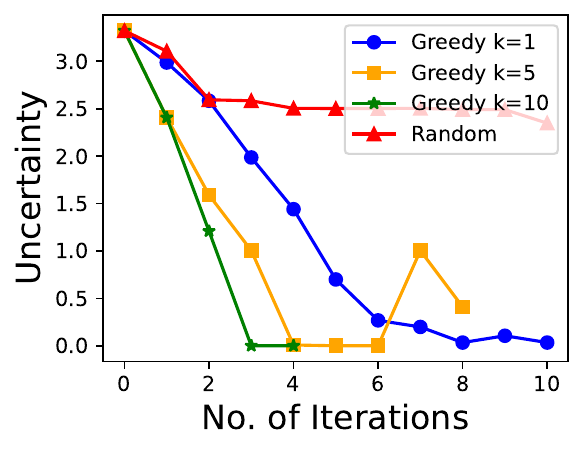}
}
\caption{Random Selection v.s. Greedy Approximation Selection with 1k \& 2k Budget-Constraint.}
\label{exp}
\vspace{-1.5em}
\end{figure}

Then, we present a practical example within the uncertainty reduction process. To illustrate the efficacy of our approximate algorithm, we compared our method against random selection. As depicted in Figure~\ref{cases}, our greedy-based algorithm consistently selects the most valuable yet least costly questions, thereby expediting uncertainty reduction. Conversely, random selection of MQs for LLM verification results in a slower and more costly uncertainty reduction process.

Next, we present a comparative analysis of various $k$ settings in Figure~\ref{exp}. When $k=1$, only the single most valuable matching question is submitted to the LLM for verification. When $k>1$, multiple MQs are submitted simultaneously, accelerating the process as the primary delay arises from awaiting the LLM's response. Each iteration can also be regarded as a discrete-time step in real-world implementations. While results vary by data type and the baseline performance of the LLM on specific datasets, our findings consistently demonstrate that our greedy algorithm significantly outperforms random selection methods. Additionally, we conducted experiments to assess uncertainty reduction in low-budget scenarios. In these tests, we set $k=1,3,5$ with a 1k token budget and  $k=1,5,10$ with a 2k token budget. These conditions revealed that higher numbers of MQs enhance the uncertainty reduction effect. After examining various $k$ settings and budget constraints, we conclude that increasing the number of iterations in limited budget scenarios is more beneficial. Conversely, with a sufficient budget, querying multiple MQs concurrently is preferable to optimize time efficiency.

An interesting observation is that the uncertainty of the result set occasionally increased after certain adjustments. This phenomenon likely arises when high-confidence partitions conflict with subsequent responses from the LLM, thereby altering the potential partitions and increasing uncertainty. Nevertheless, uncertainty is eventually reduced to a minimal level, provided there is a sufficient budget. Furthermore, our experiments elucidate the relationship between budget constraints and the pace of uncertainty reduction. We noted that uncertainty diminishes more rapidly in scenarios with larger budgets, primarily due to the increased number of queries permissible within the same timeframe. However, we also observed that a higher budget does not invariably lead to a proportional decrease in uncertainty.

\subsection{Data Quality}

To demonstrate the correctness of our proposed method, we assess the precision and recall of the most probable partition after each iteration—a process of reducing uncertainty. Figure~\ref{exp2} illustrates the precision and recall across different MQs selection methods per iteration. Notably, our method markedly enhances precision compared to random-based selection. However, we observe a decline in recall as $k$ increases, particularly for $k=5$ and $k=10$. This trend can be attributed to three factors: firstly, the initial possible partition may include too many erroneous matching pairs; secondly, the informational value of selecting $k$ MQs is lower than when $k=1$; thirdly, given the NP-hardness of the problem, our selection of MQs is constrained to those near-optimal.

\section{Related Works}
In recent decades, many researchers have focused on integrating machine learning to handle uncertainty in entity resolution more effectively. Notably, Christen and Goiser \cite{christen2007quality} applied supervised learning to improve the quality of probabilistic record linkage. Ebraheem et al. \cite{ebraheem2017deeper} investigated the use of deep learning to resolve entities under uncertainty, achieving notable improvements in accuracy. Hassanzadeh and Miller \cite{hassanzadeh2009creating} addressed the inherent ambiguities in data sources, proposing methods to clarify these ambiguities in entity resolution processes. Moreover, Bhattacharya and Getoor \cite{bhattacharya2007collective} utilized graphical models to represent and mitigate uncertainty in entity resolution, demonstrating the effectiveness of probabilistic models. These studies collectively advance our understanding of uncertainty management in entity resolution, setting the stage for developing more precise and reliable techniques in complex data environments.

\begin{figure}[t]
\subfloat[Precision]{
  \includegraphics[width=0.23\textwidth]{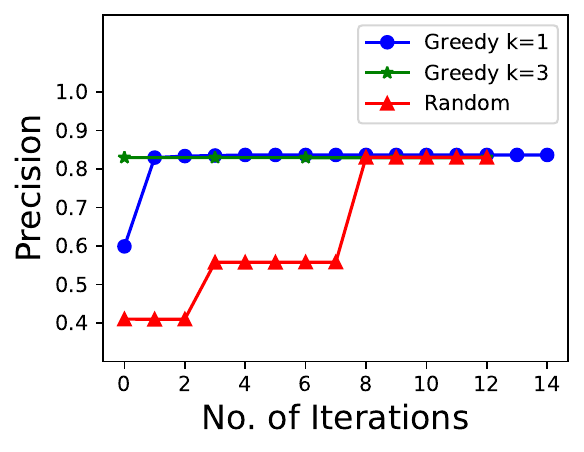}
}
\subfloat[Recall]{
  \includegraphics[width=0.23\textwidth]{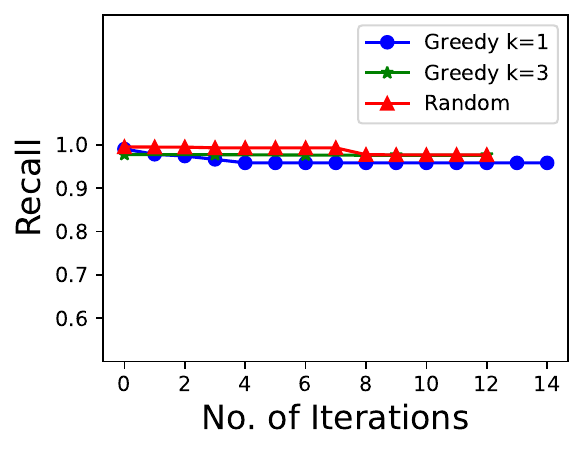}
}
\vspace{-0.5em}
\caption{Data Quality with Budget Constraint.}
\vspace{-1.5em}
\label{exp2}
\end{figure}

The application of language models in entity resolution represents a burgeoning and promising field of research. As the scale of pre-training data and model parameters increases, these large models offer emergent capabilities that make entity resolution tasks effective and accessible. Yuliang Li and Jinfeng Li \cite{Li_2020} have illustrated transformer-based models' automation and enhancement potential, particularly in managing extensive, real-world datasets. Further, Jiawei Tang et al. \cite{tang2022generic} have employed GPT-3 in complex datasets, highlighting its advanced language comprehension as a valuable asset in identifying and clarifying ambiguous entity references. Recent studies \cite{narayan2022can, peeters2023entity} have explored the application of LLMs with minimal supervision from labeled data, revealing that in-context learning significantly boosts the models' performance. However, as prompt lengths increase, so does the cost of API requests. Zhang et al. \cite{zhang2023large} and Fan et al. \cite{fan2024cost} have adopted batch prompt techniques to mitigate these costs. In contrast, our framework focuses on cost reduction by strategically selecting valuable MQs, which can be combined with these methods to decrease expenses further.

\section{Conclusion}

In this study, we introduce an uncertainty reduction framework that uses LLMs to enhance entity resolution. To balance the effectiveness with the API request cost, our main contributions include simplifying the selection process of MQs, which we show to be mathematically equivalent to the joint entropy of possible answer sets. This finding has profound implications for the efficiency of the question selection process. Moreover, to allow a parallel solution, we prove the NP-hardness of the MQsSP and provide a polynomial-time approximation algorithm leveraging submodular properties for near-optimal results. Lastly, we propose an error-tolerant method that significantly improves outcomes. Experimental studies have demonstrated the effectiveness and correctness of our method. Future work includes utilizing LLM's confidence in each answer to achieve a more precise uncertainty reduction process.



\bibliographystyle{ACM-Reference-Format}
\bibliography{sample-base}


\begin{thebibliography}{26}


\ifx \showCODEN    \undefined \def \showCODEN     #1{\unskip}     \fi
\ifx \showDOI      \undefined \def \showDOI       #1{#1}\fi
\ifx \showISBNx    \undefined \def \showISBNx     #1{\unskip}     \fi
\ifx \showISBNxiii \undefined \def \showISBNxiii  #1{\unskip}     \fi
\ifx \showISSN     \undefined \def \showISSN      #1{\unskip}     \fi
\ifx \showLCCN     \undefined \def \showLCCN      #1{\unskip}     \fi
\ifx \shownote     \undefined \def \shownote      #1{#1}          \fi
\ifx \showarticletitle \undefined \def \showarticletitle #1{#1}   \fi
\ifx \showURL      \undefined \def \showURL       {\relax}        \fi
\providecommand\bibfield[2]{#2}
\providecommand\bibinfo[2]{#2}
\providecommand\natexlab[1]{#1}
\providecommand\showeprint[2][]{arXiv:#2}

\bibitem[Bhattacharya and Getoor(2007)]%
        {bhattacharya2007collective}
\bibfield{author}{\bibinfo{person}{Indrajit Bhattacharya} {and}
  \bibinfo{person}{Lise Getoor}.} \bibinfo{year}{2007}\natexlab{}.
\newblock \showarticletitle{Collective entity resolution in relational data}.
\newblock \bibinfo{journal}{\emph{ACM Transactions on Knowledge Discovery from
  Data (TKDD)}} \bibinfo{volume}{1}, \bibinfo{number}{1}
  (\bibinfo{year}{2007}), \bibinfo{pages}{5--es}.
\newblock


\bibitem[Blakely and Salmond(2002)]%
        {blakely2002probabilistic}
\bibfield{author}{\bibinfo{person}{Tony Blakely} {and} \bibinfo{person}{Clare
  Salmond}.} \bibinfo{year}{2002}\natexlab{}.
\newblock \showarticletitle{Probabilistic record linkage and a method to
  calculate the positive predictive value}.
\newblock \bibinfo{journal}{\emph{International journal of epidemiology}}
  \bibinfo{volume}{31}, \bibinfo{number}{6} (\bibinfo{year}{2002}),
  \bibinfo{pages}{1246--1252}.
\newblock


\bibitem[Christen and Goiser(2007)]%
        {christen2007quality}
\bibfield{author}{\bibinfo{person}{Peter Christen} {and} \bibinfo{person}{Karl
  Goiser}.} \bibinfo{year}{2007}\natexlab{}.
\newblock \showarticletitle{Quality and complexity measures for data linkage
  and deduplication}.
\newblock In \bibinfo{booktitle}{\emph{Quality measures in data mining}}.
  \bibinfo{publisher}{Springer}, \bibinfo{pages}{127--151}.
\newblock


\bibitem[Christophides et~al\mbox{.}(2019)]%
        {christophides2019end}
\bibfield{author}{\bibinfo{person}{Vassilis Christophides},
  \bibinfo{person}{Vasilis Efthymiou}, \bibinfo{person}{Themis Palpanas},
  \bibinfo{person}{George Papadakis}, {and} \bibinfo{person}{Kostas
  Stefanidis}.} \bibinfo{year}{2019}\natexlab{}.
\newblock \showarticletitle{End-to-end entity resolution for big data: A
  survey}.
\newblock \bibinfo{journal}{\emph{arXiv preprint arXiv:1905.06397}}
  (\bibinfo{year}{2019}).
\newblock


\bibitem[De~Bruin(2019)]%
        {de_bruin_j_2019_3559043}
\bibfield{author}{\bibinfo{person}{J De~Bruin}.}
  \bibinfo{year}{2019}\natexlab{}.
\newblock \bibinfo{booktitle}{\emph{{Python Record Linkage Toolkit: A toolkit
  for record linkage and duplicate detection in Python}}}.
\newblock
\urldef\tempurl%
\url{https://doi.org/10.5281/zenodo.3559043}
\showDOI{\tempurl}


\bibitem[Ebraheem et~al\mbox{.}(2017)]%
        {ebraheem2017deeper}
\bibfield{author}{\bibinfo{person}{Muhammad Ebraheem},
  \bibinfo{person}{Saravanan Thirumuruganathan}, \bibinfo{person}{Shafiq Joty},
  \bibinfo{person}{Mourad Ouzzani}, {and} \bibinfo{person}{Nan Tang}.}
  \bibinfo{year}{2017}\natexlab{}.
\newblock \showarticletitle{DeepER--Deep Entity Resolution}.
\newblock \bibinfo{journal}{\emph{arXiv preprint arXiv:1710.00597}}
  (\bibinfo{year}{2017}).
\newblock


\bibitem[Elmagarmid et~al\mbox{.}(2006)]%
        {elmagarmid2006duplicate}
\bibfield{author}{\bibinfo{person}{Ahmed~K Elmagarmid},
  \bibinfo{person}{Panagiotis~G Ipeirotis}, {and} \bibinfo{person}{Vassilios~S
  Verykios}.} \bibinfo{year}{2006}\natexlab{}.
\newblock \showarticletitle{Duplicate record detection: A survey}.
\newblock \bibinfo{journal}{\emph{IEEE Transactions on knowledge and data
  engineering}} \bibinfo{volume}{19}, \bibinfo{number}{1}
  (\bibinfo{year}{2006}), \bibinfo{pages}{1--16}.
\newblock


\bibitem[Fan et~al\mbox{.}(2024)]%
        {fan2024cost}
\bibfield{author}{\bibinfo{person}{Meihao Fan}, \bibinfo{person}{Xiaoyue Han},
  \bibinfo{person}{Ju Fan}, \bibinfo{person}{Chengliang Chai},
  \bibinfo{person}{Nan Tang}, \bibinfo{person}{Guoliang Li}, {and}
  \bibinfo{person}{Xiaoyong Du}.} \bibinfo{year}{2024}\natexlab{}.
\newblock \showarticletitle{Cost-effective in-context learning for entity
  resolution: A design space exploration}. In \bibinfo{booktitle}{\emph{2024
  IEEE 40th International Conference on Data Engineering (ICDE)}}. IEEE,
  \bibinfo{pages}{3696--3709}.
\newblock


\bibitem[Fellegi and Sunter(1969)]%
        {fellegi1969theory}
\bibfield{author}{\bibinfo{person}{Ivan~P Fellegi} {and}
  \bibinfo{person}{Alan~B Sunter}.} \bibinfo{year}{1969}\natexlab{}.
\newblock \showarticletitle{A theory for record linkage}.
\newblock \bibinfo{journal}{\emph{J. Amer. Statist. Assoc.}}
  \bibinfo{volume}{64}, \bibinfo{number}{328} (\bibinfo{year}{1969}),
  \bibinfo{pages}{1183--1210}.
\newblock


\bibitem[Getoor and Machanavajjhala(2013)]%
        {getoor2013entity}
\bibfield{author}{\bibinfo{person}{Lise Getoor} {and} \bibinfo{person}{Ashwin
  Machanavajjhala}.} \bibinfo{year}{2013}\natexlab{}.
\newblock \showarticletitle{Entity resolution for big data}. In
  \bibinfo{booktitle}{\emph{Proceedings of the 19th ACM SIGKDD international
  conference on Knowledge discovery and data mining}}.
  \bibinfo{pages}{1527--1527}.
\newblock


\bibitem[Hassanzadeh and Miller(2009)]%
        {hassanzadeh2009creating}
\bibfield{author}{\bibinfo{person}{Oktie Hassanzadeh} {and}
  \bibinfo{person}{Ren{\'e}e~J Miller}.} \bibinfo{year}{2009}\natexlab{}.
\newblock \showarticletitle{Creating probabilistic databases from duplicated
  data}.
\newblock \bibinfo{journal}{\emph{The VLDB Journal}} \bibinfo{volume}{18},
  \bibinfo{number}{5} (\bibinfo{year}{2009}), \bibinfo{pages}{1141--1166}.
\newblock


\bibitem[Horel(2015)]%
        {horel2015notes}
\bibfield{author}{\bibinfo{person}{Thibaut Horel}.}
  \bibinfo{year}{2015}\natexlab{}.
\newblock \showarticletitle{Notes on greedy algorithms for submodular
  maximization}.
\newblock \bibinfo{journal}{\emph{Lecture Notes, Available at https://thibaut.
  horel. org/submodularity/notes/02-12. pdf}} (\bibinfo{year}{2015}).
\newblock


\bibitem[Jaro(1989)]%
        {jaro1989advances}
\bibfield{author}{\bibinfo{person}{Matthew~A Jaro}.}
  \bibinfo{year}{1989}\natexlab{}.
\newblock \showarticletitle{Advances in record-linkage methodology as applied
  to matching the 1985 census of Tampa, Florida}.
\newblock \bibinfo{journal}{\emph{J. Amer. Statist. Assoc.}}
  \bibinfo{volume}{84}, \bibinfo{number}{406} (\bibinfo{year}{1989}),
  \bibinfo{pages}{414--420}.
\newblock


\bibitem[Khuller et~al\mbox{.}(1999)]%
        {khuller1999budgeted}
\bibfield{author}{\bibinfo{person}{Samir Khuller}, \bibinfo{person}{Anna Moss},
  {and} \bibinfo{person}{Joseph~Seffi Naor}.} \bibinfo{year}{1999}\natexlab{}.
\newblock \showarticletitle{The budgeted maximum coverage problem}.
\newblock \bibinfo{journal}{\emph{Information processing letters}}
  \bibinfo{volume}{70}, \bibinfo{number}{1} (\bibinfo{year}{1999}),
  \bibinfo{pages}{39--45}.
\newblock


\bibitem[Li et~al\mbox{.}(2024)]%
        {li2024booster}
\bibfield{author}{\bibinfo{person}{Huahang Li}, \bibinfo{person}{Shuangyin Li},
  \bibinfo{person}{Fei Hao}, \bibinfo{person}{Chen~Jason Zhang},
  \bibinfo{person}{Yuanfeng Song}, {and} \bibinfo{person}{Lei Chen}.}
  \bibinfo{year}{2024}\natexlab{}.
\newblock \showarticletitle{BoostER: Leveraging Large Language Models for
  Enhancing Entity Resolution}. In \bibinfo{booktitle}{\emph{Companion
  Proceedings of the ACM on Web Conference 2024}}. \bibinfo{pages}{1043--1046}.
\newblock


\bibitem[Li et~al\mbox{.}(2020a)]%
        {li2020deep}
\bibfield{author}{\bibinfo{person}{Yuliang Li}, \bibinfo{person}{Jinfeng Li},
  \bibinfo{person}{Yoshihiko Suhara}, \bibinfo{person}{AnHai Doan}, {and}
  \bibinfo{person}{Wang-Chiew Tan}.} \bibinfo{year}{2020}\natexlab{a}.
\newblock \showarticletitle{Deep entity matching with pre-trained language
  models}.
\newblock \bibinfo{journal}{\emph{arXiv preprint arXiv:2004.00584}}
  (\bibinfo{year}{2020}).
\newblock


\bibitem[Li et~al\mbox{.}(2020b)]%
        {Li_2020}
\bibfield{author}{\bibinfo{person}{Yuliang Li}, \bibinfo{person}{Jinfeng Li},
  \bibinfo{person}{Yoshihiko Suhara}, \bibinfo{person}{AnHai Doan}, {and}
  \bibinfo{person}{Wang-Chiew Tan}.} \bibinfo{year}{2020}\natexlab{b}.
\newblock \showarticletitle{Deep entity matching with pre-trained language
  models}.
\newblock \bibinfo{journal}{\emph{Proceedings of the VLDB Endowment}}
  \bibinfo{volume}{14}, \bibinfo{number}{1} (\bibinfo{date}{Sept.}
  \bibinfo{year}{2020}), \bibinfo{pages}{50–60}.
\newblock
\showISSN{2150-8097}
\urldef\tempurl%
\url{https://doi.org/10.14778/3421424.3421431}
\showDOI{\tempurl}


\bibitem[Miller et~al\mbox{.}(2009)]%
        {LevenshteinDistance}
\bibfield{author}{\bibinfo{person}{Frederic~P. Miller},
  \bibinfo{person}{Agnes~F. Vandome}, {and} \bibinfo{person}{John McBrewster}.}
  \bibinfo{year}{2009}\natexlab{}.
\newblock \bibinfo{booktitle}{\emph{Levenshtein Distance: Information Theory,
  Computer Science, String (Computer Science), String Metric,
  Damerau?Levenshtein Distance, Spell Checker, Hamming Distance}}.
\newblock \bibinfo{publisher}{Alpha Press}.
\newblock
\showISBNx{6130216904}


\bibitem[Min et~al\mbox{.}(2023)]%
        {min2023recent}
\bibfield{author}{\bibinfo{person}{Bonan Min}, \bibinfo{person}{Hayley Ross},
  \bibinfo{person}{Elior Sulem}, \bibinfo{person}{Amir Pouran~Ben Veyseh},
  \bibinfo{person}{Thien~Huu Nguyen}, \bibinfo{person}{Oscar Sainz},
  \bibinfo{person}{Eneko Agirre}, \bibinfo{person}{Ilana Heintz}, {and}
  \bibinfo{person}{Dan Roth}.} \bibinfo{year}{2023}\natexlab{}.
\newblock \showarticletitle{Recent advances in natural language processing via
  large pre-trained language models: A survey}.
\newblock \bibinfo{journal}{\emph{Comput. Surveys}} \bibinfo{volume}{56},
  \bibinfo{number}{2} (\bibinfo{year}{2023}), \bibinfo{pages}{1--40}.
\newblock


\bibitem[Narayan et~al\mbox{.}(2022)]%
        {narayan2022can}
\bibfield{author}{\bibinfo{person}{Avanika Narayan}, \bibinfo{person}{Ines
  Chami}, \bibinfo{person}{Laurel Orr}, \bibinfo{person}{Simran Arora}, {and}
  \bibinfo{person}{Christopher R{\'e}}.} \bibinfo{year}{2022}\natexlab{}.
\newblock \showarticletitle{Can foundation models wrangle your data?}
\newblock \bibinfo{journal}{\emph{arXiv preprint arXiv:2205.09911}}
  (\bibinfo{year}{2022}).
\newblock


\bibitem[Peeters and Bizer(2023)]%
        {peeters2023entity}
\bibfield{author}{\bibinfo{person}{Ralph Peeters} {and}
  \bibinfo{person}{Christian Bizer}.} \bibinfo{year}{2023}\natexlab{}.
\newblock \showarticletitle{Entity matching using large language models}.
\newblock \bibinfo{journal}{\emph{arXiv preprint arXiv:2310.11244}}
  (\bibinfo{year}{2023}).
\newblock


\bibitem[Tang et~al\mbox{.}(2022)]%
        {tang2022generic}
\bibfield{author}{\bibinfo{person}{Jiawei Tang}, \bibinfo{person}{Yifei Zuo},
  \bibinfo{person}{Lei Cao}, {and} \bibinfo{person}{Samuel Madden}.}
  \bibinfo{year}{2022}\natexlab{}.
\newblock \showarticletitle{Generic entity resolution models}. In
  \bibinfo{booktitle}{\emph{NeurIPS 2022 First Table Representation Workshop}}.
\newblock


\bibitem[Winkler(2014)]%
        {winkler2014matching}
\bibfield{author}{\bibinfo{person}{William~E Winkler}.}
  \bibinfo{year}{2014}\natexlab{}.
\newblock \showarticletitle{Matching and record linkage}.
\newblock \bibinfo{journal}{\emph{Wiley interdisciplinary reviews:
  Computational statistics}} \bibinfo{volume}{6}, \bibinfo{number}{5}
  (\bibinfo{year}{2014}), \bibinfo{pages}{313--325}.
\newblock


\bibitem[Zeakis et~al\mbox{.}(2023)]%
        {zeakis2023pre}
\bibfield{author}{\bibinfo{person}{Alexandros Zeakis}, \bibinfo{person}{George
  Papadakis}, \bibinfo{person}{Dimitrios Skoutas}, {and}
  \bibinfo{person}{Manolis Koubarakis}.} \bibinfo{year}{2023}\natexlab{}.
\newblock \showarticletitle{Pre-Trained Embeddings for Entity Resolution: An
  Experimental Analysis}.
\newblock \bibinfo{journal}{\emph{Proceedings of the VLDB Endowment}}
  \bibinfo{volume}{16}, \bibinfo{number}{9} (\bibinfo{year}{2023}),
  \bibinfo{pages}{2225--2238}.
\newblock


\bibitem[Zhang et~al\mbox{.}(2023)]%
        {zhang2023large}
\bibfield{author}{\bibinfo{person}{Haochen Zhang}, \bibinfo{person}{Yuyang
  Dong}, \bibinfo{person}{Chuan Xiao}, {and} \bibinfo{person}{Masafumi
  Oyamada}.} \bibinfo{year}{2023}\natexlab{}.
\newblock \showarticletitle{Large language models as data preprocessors}.
\newblock \bibinfo{journal}{\emph{arXiv preprint arXiv:2308.16361}}
  (\bibinfo{year}{2023}).
\newblock


\bibitem[Zhao et~al\mbox{.}(2023)]%
        {zhao2023survey}
\bibfield{author}{\bibinfo{person}{Wayne~Xin Zhao}, \bibinfo{person}{Kun Zhou},
  \bibinfo{person}{Junyi Li}, \bibinfo{person}{Tianyi Tang},
  \bibinfo{person}{Xiaolei Wang}, \bibinfo{person}{Yupeng Hou},
  \bibinfo{person}{Yingqian Min}, \bibinfo{person}{Beichen Zhang},
  \bibinfo{person}{Junjie Zhang}, \bibinfo{person}{Zican Dong},
  {et~al\mbox{.}}} \bibinfo{year}{2023}\natexlab{}.
\newblock \showarticletitle{A survey of large language models}.
\newblock \bibinfo{journal}{\emph{arXiv preprint arXiv:2303.18223}}
  (\bibinfo{year}{2023}).
\newblock


\end{thebibliography}


\end{document}